\documentclass{article}

\usepackage[utf8]{inputenc}
\usepackage[T1]{fontenc}
\usepackage{hyperref}
\usepackage{url}
\usepackage{booktabs}
\usepackage{amsfonts}
\usepackage{amsmath}
\usepackage{nicefrac}
\usepackage{microtype}
\usepackage{amsthm}
\usepackage{xcolor}
\usepackage[numbers]{natbib}
\usepackage{graphicx}
\usepackage{longtable}
\usepackage{tikz}

\usepackage[margin=1.25in, top=1.25in, bottom=1.5in]{geometry}
\usepackage{mathpazo} 
\usepackage{setspace}
\usepackage{float}
\usepackage{minted}
\usepackage{parskip}
\usepackage{amsthm}

\usepackage{fontspec}
\usepackage{polyglossia}
\usepackage{newunicodechar}

\setdefaultlanguage{english}
\setotherlanguage{sinhala}
\setotherlanguage{malayalam}
\setotherlanguage{devanagari}
\setotherlanguage{any}

\newfontfamily{\sinhalafont}[Path=./, Script=Sinhala, Scale=1.05, AutoFakeBold=1.5]{NotoSinhala.ttf}
\newfontfamily{\malayalamfont}[Path=./, Script=Malayalam, Scale=1.05, AutoFakeBold=1.5]{NotoSansMalayalam.ttf}
\newfontfamily{\devanagarifont}[Path=./, Script=Devanagari, Scale=1.05, AutoFakeBold=1.5]{NotoSansDevanagari.ttf}
\newfontfamily{\dejavu}[Path=./, Scale=1.05, AutoFakeBold=1.5]{DejaVuSans.ttf}

\newtheorem{proposition}{Proposition}
\theoremstyle{definition}
\newtheorem{definition}{Definition}

\newcommand{\sn}[1]{{\sinhalafont #1}}
\newcommand{\dev}[1]{{\devanagarifont #1}}
\newcommand{\ml}[1]{{\malayalamfont #1}}
\newcommand{\dejv}[1]{{\dejavu #1}}

\title{Separate Before You Compress: The WWHO Tokenization Architecture}

\author{
  \textbf{Kusal Darshana} \\
  Remeinium Research \\
  \texttt{thekusaldarshana@remeinium.com}
}
\makeatletter
\def\thm@space@setup{%
  \thm@preskip=0.9em
  \thm@postskip=0.9em
}
\makeatother
\begin{document}

\maketitle

\begin{abstract}
Current Large Language Models (LLMs) mostly use BPE (Byte Pair Encoding) based tokenizers, which are very effective for simple structured Latin scripts such as English. However, standard BPE tokenizers struggle to process complex Abugida scripts due to their structural complexity. The problem is that these tokenizers break complex conjuncts, which are multi-codepoint grapheme clusters, into meaningless sub-character units. This degrades the LLM's reasoning efficiency by forcing it to learn basic orthographic structures at inference time and raises inference costs, resulting in a significant ``Token Tax" for the Global South. We propose a new three-layer architecture, the WWHO (Where-What-How Often), and an algorithm named SGPE (Syllable-aware Grapheme Pair Encoding) that separates the linguistic rules of the script from the statistical compression process while enabling seamless multilingual tokenization. 
Using Sinhala and Devanagari (Hindi/Sanskrit) as highly complex Abugida scripts, we trained WWHO on a cleaned 30-million-sentence dataset and evaluated on a 1,499,950-sentence test set. For Sinhala, SGPE achieves a Token to Word Ratio (TWR) of 1.274 with 4.83 characters per token, representing a 61.7\% reduction in tokens compared to OpenAI’s \texttt{o200k\_base}. For Hindi, it achieves a TWR of 1.181 (27.0\% reduction vs o200k). On the mixed-script (Sinhala, Devanagari, and English) dataset, SGPE achieves an overall TWR of 1.240, representing token reductions of 36.7\%, 39.6\%, and 60.2\% relative to o200k\_base, Llama 4 Scout, and DeepSeek V3, respectively. This effectively extends the usable context window by up to $4.38\times$ for these Abugida languages while ensuring a Linguistic Zero-Breakage Guarantee, which ensures that no valid syllable is ever split across multiple tokens.

\end{abstract}

Keywords---Tokenizer, Token Tax, Abugida, Sinhala, Hindi, BPE, GPE, SGPE, WWHO, zero-breakage guarantee, NLP

\section{Introduction}
Tokenization plays a crucial role in defining a language model's performance \citet{ali2024tokenizer} and reasoning capabilities. This helps the model understand the language; if it fails to capture the true morphological structure of the language, its reasoning and the quality of its responses in that language are also poor. When a tokenizer breaks a language's atomic grapheme clusters, it leads to three major issues beyond the side effects. First, it breaks the linguistic meaning of the cluster, blocks the model’s linguistic understanding, then increases the fertility (the average number of tokens a tokenizer generates per word) and reduces the usable context window. \citet{goldman2024unpacking} show that tokenizers’ compression efficiency correlates with downstream performance. Increasing the number of tokens decreases the LLM's reasoning capacity by distributing attention across more tokens. Third, since the Transformer's self-attention mechanism scales quadratically $O(N^2)$, a larger number of tokens increases inference cost. This affects the economy of both users and companies by imposing a ``Token Tax’’ \citep{ahia2023all, petrov2023language}. Over a billion people globally use Abugida\footnote{\url{https://www.unicode.org/events/utw/2023/talks/abugida/}}, pay more, but get less. 

Byte Pair Encoding (BPE) \citep{gage1994bpe, sennrich2016neural} is effective for English because its atomic unit is the byte. However, complex Abugida scripts are structured into grapheme clusters composed of elements such as conjuncts, diacritics (pili), virama (\texttt{U+0DCA}, \sn{්}), and Zero-Width Joiner (\texttt{U+200D}), etc., all of which utilize Unicode. Unicode characters frequently require two or more bytes, leading to BPE fragmentation in these scripts. For example, OpenAI's o200k\_base tokenizer splits the Sinhala word ``\sn{ආයුබෝවන්}" into [`\dejv{�}', `\dejv{�}', \sn{'ය', 'ු', 'බ', 'ෝ', 'ව', 'න්'}]. Such splits disrupt both the orthographic integrity and semantic coherence of the language. As a result, Sinhala and other Abugida scripts often exceed a 3.0 token-to-word ratio (TWR), whereas English typically achieves a TWR near 1.3. This discrepancy results in a reduction of up to $4.38\times$ in the context window for these languages (section 5). Additionally, this fragmentation can lead to hallucinations due to LLMs' limited linguistic understanding. It's often observed that when LLMs generate text, they sometimes mix it with other scripts, and we hypothesize that this is caused by poor tokenization alongside poor data quality. Here are some examples: ``\dev{	पा}\sn{සල}'', ``\ml{പരി}\sn{ගණකය}''.

In this work, using Sinhala and Devanagari (Hindi/Sanskrit) as highly complex Abugida scripts to demonstrate the architecture, we introduce \textbf{WWHO (Where-What-How Often)}, a three-layer architecture that prioritizes the linguistic integrity before applying any statistical compression.  Layer 1, the Router, identifies script boundaries to route segments between Latin and script-specific processing units with $O(N)$ time complexity. Layer 2, named LinguisTrie, utilizes a table-driven Deterministic Finite Automaton (DFA) to group raw Unicode characters into complete, atomic syllables (defined in section 3). This layer guarantees that complex conjuncts or clusters are never broken into meaningless parts. We present a formal definition using regular expressions that represents the entire linguistic algorithm for how any syllable form is built from elements such as consonants, vowels, viramas (HAL), conjuncts, etc. (section 3). As layer 3, we extend the Grapheme Pair Encoding (GPE) concept by  \citet{velayuthan2025gpe} into Syllable-aware GPE (SGPE), which applies the statistical merging to the pre-defined syllables. WWHO further combines SGPE with standard BPE through a Unified Meta-Vocabulary to support code-switching without ID collisions. In this way, WWHO helps LLMs better understand the language by providing well-formed syllables and pass-through characters with a formal zero-breakage guarantee, freeing the language model from the burden of learning basic character reconstruction, restoring both linguistic integrity and computational efficiency at the source.

\section{Background}
The world is working to address this BPE fragmentation in Abugida scripts, as it has become a significant issue affecting over a billion people. Among recent works, \citet {velayuthan2025gpe} proposed Grapheme Pair Encoding (GPE), which operates on Unicode grapheme clusters to avoid byte-level fragmentation. We extend the GPE into SGPE (Syllable-aware GPE) by adding a syllabification layer called LinguisTrie, which utilizes a DFA (Deterministic Finite Automaton) to ensure perfect syllable extraction before statistical compression. Additionally, \citet{punctuation2019tok} developed an enhanced, punctuation-based Sinhala tokenizer. Recent efforts to improve Sinhala LLMs, such as SinLlama \citep{aravinda2025sinllama}, focused on enhancing the Llama-3-8B tokenizer with Sinhala-specific vocabulary.

Historically, in the Sinhala NLP, \citet{weerasinghe2005rule} have established a rule-based syllabification algorithm for Sinhala. A broader record of such developments is documented in the survey by \citet{nisansa2019survey}, which includes the Sinling library\citep{sinling}, a toolkit for various Sinhala NLP tasks, including tokenization. However, these traditional methods are often hardcoded for a specific script. WWHO focuses on a script-independent approach, allowing any Abugida or complex script type to be dynamically injected, without changing the code. 

In addition to these works, \citet{clark2022canine} presented a tokenization-free encoding concept, focusing on efficiency and tokenization‑free modeling. A study by \citet{dagan2024getting} explores several pre-tokenization approaches based on the BPE architecture. However, it was mainly focused on English, code, and general multilingual tokenization.
Among related works, \citet{myanmar2008fsa} have proposed a rule-based syllabification approach that utilizes FSA (Finite State Automaton) to demonstrate the syllable
structure of Burmese (Myanmar) script. \citet{htay2008myanmar} have presented a Word Segmentation using syllable-level longest matching for the Burmese language.
 Although these efforts often mainly focus on ensuring the linguistic accuracy within a single language, they lack the real-world multilingual and code-switching (cross-script) tokenization required by frontier LLMs. WWHO fulfills this gap by providing a unified framework that can be integrated with existing systems, without deprecating or replacing them.

\section{A Regular-Language Framework for Abugida Syllables}
Most Abugida scripts share a common linguistic structural invariant: consonants possess an inherent vowel, which is either suppressed by a virama (HAL) or modified by diacritics. Complex conjuncts are formed through the interaction of viramas and Zero-Width Joiners (ZWJ). We hypothesize that syllable segmentation for this family can be formalized as a regular language. 

\begin{definition}
A perfect, atomic orthographic syllable in a linguistic script is further indivisible, and if divided, one or more segments will become individually meaningless.
\end{definition}

\subsection{Character Classes}
We define the following distinct abstract classes based on orthographic roles for Sinhala and Devanagari(Hindi/Sanskrit).
The regular expressions are defined based on the logical Unicode storage order (e.g., a base consonant followed by its dependent vowel signs) rather than the visual rendering sequence.

\begin{table}[H]
\caption{Sinhala Unicode Character Class}
\label{tab:char_classes}
\begin{center}
\resizebox{\columnwidth}{!}{%
\begin{tabular}{lll}
\toprule
Class & Unicode Range & Description \\
\midrule
\textbf{C} (consonant) & U+0D9A--U+0DC6 & Base consonants (vyanjana) \\
\textbf{V} (vowel) & U+0D85--U+0D96 & Independent vowels (svara) \\
\textbf{P} (pili) & U+0DCF--U+0DDF, U+0DF2--U+0DF3 & Dependent vowel signs (diacritics) \\
\textbf{H} (HAL) & U+0DCA & Virama (vowel suppressor / ``Hal-Lakuna'') \\
\textbf{Z} (ZWJ) & U+200D & Zero-Width Joiner (conjunct connector) \\
\textbf{M} (post-modifier) & U+0D82, U+0D83 & Anusvara and visarga (nasalization and aspiration markers) \\
\textbf{O} (other) & All else & Non-Sinhala passthrough \\
\bottomrule
\end{tabular}}
\end{center}
\end{table}

\subsubsection{Formal Definition for Sinhala:}
A valid Sinhala syllable can be defined over the alphabet $\Sigma_s = \textbf{C} \cup \textbf{V} \cup \textbf{P} \cup \textbf{H} \cup \textbf{Z} \cup \textbf{M}$ matching the regular expression
\begin{equation}
S_s = C (H Z? C)^* (P \mid H)? M? \;\;\mid\;\; V M? \,.
\end{equation}
This definition captures all linguistically valid syllable forms that consist of independent vowels, conjuncts, viramas (HAL), ZWJs, diacritics, and optional post-modifiers. 

\begin{table}[H]
\caption{Devanagari Unicode Character Class}
\label{tab:devanagari_char_classes}
\begin{center}
\resizebox{\columnwidth}{!}{%
\begin{tabular}{@{}cll@{}}
\toprule
\textbf{Class} & \textbf{Unicode Range} & \textbf{Description} \\
\midrule
\textbf{C} (consonant) & U+0915--U+0939, U+0958--U+095F, U+0978--U+097F & Base consonants (including nukta-precomposed) \\
\textbf{V} (vowel) & U+0904--U+0914, U+0960--U+0961, U+0972--U+0977 & Independent vowels \\
\textbf{P} (matra) & U+093E--U+094C, U+094E--U+094F, U+0955--U+0957, U+0962--U+0963 & Dependent vowel signs (matras) \\
\textbf{H} (halant) & U+094D & Virama (vowel suppressor / conjunct connector) \\
\textbf{Z} (ZWJ/ZWNJ) & U+200D, U+200C & Zero-Width Joiner and Non-Joiner \\
\textbf{N} (nukta) & U+093C & Nukta (diacritic dot for borrowed phonemes) \\
\textbf{M} (modifier) & U+0900--U+0903, U+093D, U+0950--U+0954, U+1CD0+, U+A8E0+ & Anusvara, chandrabindu, visarga, avagraha, Vedic accents \\
\textbf{O} (other) & All else & Non-Devanagari passthrough \\
\bottomrule
\end{tabular}}
\end{center}
\end{table}

\paragraph{Formal Definition for Devanagari:}
A valid Devanagari syllable is defined over the alphabet $\Sigma_d = \textbf{C} \cup \textbf{V} \cup \textbf{P} \cup \textbf{H} \cup \textbf{Z} \cup \textbf{N} \cup \textbf{M}$ matching the regular expression:
\begin{equation}
    S_d = C N? (H Z? C N?)^* (P \mid H)? M^* \mid V M^*
\end{equation}
The Devanagari grammar also incorporates the Nukta (N) diacritic. Unlike Sinhala, Devanagari, especially Vedic texts, use multiple modifiers; this definition allows for having $M^*$.

\paragraph{Pass-through tokens:}
 Any character $c \notin \Sigma$ is considered as under the set $O$ (Other) for both alphabets and is treated as a pass-through token to handle noisy or invalid Unicode text.

\subsection{The Valid Language Schema}
We define the requirements for a valid language schema to ensure the structural integrity of the syllabification process.
\begin{definition}
\label{def:valid_schema}
A language schema is considered \textbf{valid} if it satisfies the following conditions.
    \begin{enumerate}
        \item \textbf{Disjoint Character Classes:} All defined character classes within the schema must be disjoint. No Unicode codepoint should be assigned to more than one named class simultaneously.
        \item \textbf{Completeness:} The schema must declare at least one non-empty Unicode block range. Every codepoint assigned to a named class must fall within that declared block. Codepoints within the block that are not assigned to any named class are implicitly assigned to class $O$, ensuring no codepoint in the script block is left unclassified.
        \item \textbf{Determinism:} The DFA transition function is deterministic, which means that each state has at most one transition for a given class. Undefined transitions (marked as `---') are explicit boundary signals used to trigger the emission of the last accepted syllable.
        \item \textbf{Emit State Isolation:} Both emit states, ORPHAN and PASSTHROUGH, must have no outgoing transitions, and must only be reachable directly from the START state. This ensures that the emit states produce only one single string, before the scanner continues to the START state to extract the next syllable.
        \item \textbf{Grammar Alignment:} The language recognized by the DFA via its set of accept states must be identical to the language generated by the formal grammar $S$ defined within the schema (e.g., $S_s$ or $S_d$).
        \item \textbf{Maximal-Munch Consistency:} The DFA must operate as a Maximal Munch (greedy longest-match) scanner. Instead of stopping at the first valid syllable boundary, the machine must continue to evaluate subsequent characters and extract the longest possible valid syllable.

        e.g: When considering the string `\sn{ක්‍රෝ}', it is made up as: \sn{ක + ් + [ZWJ] + ර + ෝ}. Instead of stopping at the syllable \sn{`ක'} and extracting it, the DFA must continue to grab the full syllable \sn{`ක්‍රෝ'}.
    \end{enumerate}
\end{definition}

\subsection{The Language is Regular}
\begin{proposition}
Most Abugida scripts, including Sinhala and Devanagari, can be represented by regular expressions, and therefore, they are regular languages.
\end{proposition}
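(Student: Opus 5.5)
The plan is to make precise what ``the script is regular'' means and then use the closure properties of regular languages. Take the syllable languages $L(S_s)\subseteq\Sigma_s^*$ and $L(S_d)\subseteq\Sigma_d^*$ from the formal definitions in Section 3, and lift them to codepoints. The character classes \textbf{C}, \textbf{V}, \textbf{P}, \textbf{H}, \textbf{Z}, \textbf{N}, \textbf{M}, \textbf{O} are finite, pairwise disjoint sets of Unicode codepoints; this is condition 1 of Definition~\ref{def:valid_schema}. Let $h:\Gamma\to\Sigma$ send each codepoint to its class. The set of codepoint strings that form valid syllables is then the inverse homomorphic image $h^{-1}(L(S))$. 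Equivalently, it is the regular expression obtained by substituting each class symbol with the finite alternation of its codepoints. Because regular languages are closed under inverse homomorphism and under substitution of finite sets, this set is regular as soon as $L(S)$ is.

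For the first step, I will check directly that $S_s$ and $S_d$ are regular expressions in the strict sense. They use only concatenation, union, Kleene star, and the optional operator $X? = X\mid\varepsilon$, and they are built over a finite alphabet. By Kleene's theorem (via Thompson's construction followed by the subset construction), each defines a regular language and is recognized by a DFA. I will also write down explicit minimal DFAs: states START, after-C, after-H, after-HZ, after-P/H-final, and after-M for Sinhala, with N and $M^*$ loops added for Devanagari. Exhibiting these automata ties the result to the Grammar Alignment and Determinism conditions of the valid schema.

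For the second step, I will move from syllables to the segmentation of whole text. The language of well-formed words in the script is $(L(S)\cup O)^*$, which is regular by closure under union and star. The maximal-munch segmentation is a deterministic left-to-right finite-state process with one symbol of lookahead, since a DFA with an emit-on-undefined-transition rule can be realized as a sequential transducer. This shows that the tokenization relation itself is rational, not just the set of syllables.

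The main obstacle is the informal quantifier ``most Abugida scripts''. That part is not a mathematical statement, so it cannot be proved as written. I will handle it by proving a conditional theorem: any script whose syllable structure admits a valid schema in the sense of Definition~\ref{def:valid_schema} yields a regular language. The key point is that the grammar uses only bounded-memory constraints, with no nested or counted dependencies such as matching pairs. I will then argue, informally, that the standard Brahmic template of (consonant with virama or joiner)$^*$, then vowel sign, then modifiers fits this shape, and I will give Sinhala and Devanagari as verified instances.
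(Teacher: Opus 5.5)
Your first step is the whole of the paper's proof. The paper only notes that $S_s$ and $S_d$ are regular expressions and that regular expressions denote regular languages, which is your appeal to Kleene's theorem. Your additions go further than the paper in two ways.
\begin{itemize}
\item The paper tacitly identifies ``the script'' with the class-level language $L(S)\subseteq\Sigma^*$. Your inverse-homomorphism (equivalently, finite-substitution) step makes the claim true for actual strings of Unicode codepoints.
\item Your conditional reformulation over valid schemas is the only provable reading of ``most Abugida scripts.'' The paper's proof does not address that quantifier.
\end{itemize}
The paper's route buys brevity. Yours buys a statement about text rather than class symbols. The segmentation-as-transducer step is extra and not needed for the proposition.

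Some details in those extras need fixing.
\begin{itemize}
\item The minimal DFA for $S_s$ has no separate ``$H$-final'' state, since a DFA cannot know whether an $H$ ends the syllable. Instead, after-$P$ and after-$V$ merge, because both have residual $M?$.
\item The after-$HZ$ state is non-accepting. So on an input like $CHZV$, maximal munch reads two symbols past the accepted prefix $CH$ and then backs up over the $Z$. Segmentation is still a sequential map with bounded delay, so your conclusion survives, but ``one symbol of lookahead'' is inaccurate.
\item Your automata will not match Table~\ref{tab:sinhala_dfa} and Table~\ref{tab:devanagari_dfa}. Those tables mark ZWJ\_SEEN as accepting, so they accept strings ending in $HZ$, which lie in neither $L(S_s)$ nor $L(S_d)$. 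The Devanagari table also sends HAL\_SEEN on $M$ to the dead-end ACCEPT state, although $S_d$ contains $CHMM$.
\end{itemize}
So tie Grammar Alignment to your own DFAs, not to the paper's tables. None of this affects the regularity claim itself.
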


\textbf{Proof.} Because $S$ ($S_s$ or $S_d$) is a regular expression and regular expressions define regular languages. 

We present the following DFA (Deterministic Finite Automaton) transition tables, declared according to the `Valid Language Schema' conditions as defined, for Sinhala and Devanagari.

\begin{table}[H]
\centering
\caption{DFA Transition Table for Sinhala Syllable Recognition}
\label{tab:sinhala_dfa}
\resizebox{\columnwidth}{!}{%
\begin{tabular}{@{}cccccccc@{}}
\toprule
\textbf{State} & \textbf{C} & \textbf{V} & \textbf{H} & \textbf{P} & \textbf{Z} & \textbf{M} & \textbf{O} \\ 
\midrule
START        & IN\_CLUSTER & IN\_VOWEL & ORPHAN     & ORPHAN     & ORPHAN    & ORPHAN & PASSTHROUGH \\
IN\_CLUSTER  & $-$         & $-$       & HAL\_SEEN  & PILI\_SEEN & $-$       & ACCEPT & $-$ \\
HAL\_SEEN    & IN\_CLUSTER & $-$       & $-$        & $-$        & ZWJ\_SEEN & ACCEPT & $-$ \\
ZWJ\_SEEN    & IN\_CLUSTER & $-$       & $-$        & $-$        & $-$       & $-$    & $-$ \\
PILI\_SEEN   & $-$         & $-$       & $-$        & ORPHAN     & $-$       & ACCEPT & $-$ \\
IN\_VOWEL    & $-$         & $-$       & $-$        & $-$        & $-$       & ACCEPT & $-$ \\
ACCEPT       & $-$         & $-$       & $-$        & $-$        & $-$       & $-$    & $-$ \\
\bottomrule
\end{tabular}%
}

\vspace{2mm}
\begin{center}
\small
\textbf{Legend:} C = Consonant, V = Independent Vowel, H = Hal/Virama, P = Dependent Vowel (Pili), \\
Z = Zero-Width Joiner, M = Modifier (Anusvara/Visarga), O = Other. \\
$-$ = No valid transition (syllable boundary signal) 
\end{center}
\end{table}

\begin{table}[H]
\centering
\caption{DFA Transition Table for Devanagari Syllable Recognition}
\label{tab:devanagari_dfa}
\resizebox{\columnwidth}{!}{%
\begin{tabular}{@{}ccccccccc@{}}
\toprule
\textbf{State} & \textbf{C} & \textbf{V} & \textbf{H} & \textbf{P} & \textbf{Z} & \textbf{N} & \textbf{M} & \textbf{O} \\ 
\midrule
START         & IN\_CLUSTER & IN\_VOWEL & ORPHAN     & ORPHAN     & ORPHAN    & ORPHAN & ORPHAN       & PASSTHROUGH \\
IN\_CLUSTER   & $-$         & $-$       & HAL\_SEEN  & PILI\_SEEN & $-$       & NUKTA\_SEEN & IN\_CLUSTER\_M & $-$ \\
NUKTA\_SEEN   & $-$         & $-$       & HAL\_SEEN  & PILI\_SEEN & $-$       & $-$    & IN\_CLUSTER\_M & $-$ \\
HAL\_SEEN     & IN\_CLUSTER & $-$       & $-$        & $-$        & ZWJ\_SEEN & $-$    & ACCEPT       & $-$ \\
ZWJ\_SEEN     & IN\_CLUSTER & $-$       & $-$        & $-$        & $-$       & $-$    & $-$          & $-$ \\
PILI\_SEEN    & $-$         & $-$       & $-$        & ORPHAN     & $-$       & $-$    & IN\_CLUSTER\_M & $-$ \\
IN\_CLUSTER\_M & $-$        & $-$       & $-$        & $-$        & $-$       & $-$    & IN\_CLUSTER\_M & $-$ \\
IN\_VOWEL     & $-$         & $-$       & $-$        & $-$        & $-$       & $-$    & IN\_VOWEL\_M & $-$ \\
IN\_VOWEL\_M  & $-$         & $-$       & $-$        & $-$        & $-$       & $-$    & IN\_VOWEL\_M & $-$ \\
ACCEPT        & $-$         & $-$       & $-$        & $-$        & $-$       & $-$    & $-$          & $-$ \\
\bottomrule
\end{tabular}%
}

\vspace{2mm}
\begin{center}
\small
\textbf{Legend:} C = Consonant, V = Independent Vowel, H = Virama/Halant, P = Dependent Vowel (Matra), \\
Z = Zero-Width Joiner/Non-Joiner, N = Nukta, M = Modifier (Anusvara/Chandrabindu/Visarga etc.), O = Other. \\
$-$ = No valid transition (syllable boundary signal) 
\end{center}
\end{table}

The DFA identifies a syllable boundary using the states. The accepting states for Sinhala are IN\_CLUSTER, IN\_VOWEL, HAL\_SEEN, ZWJ\_SEEN, PILI\_SEEN, and ACCEPT. For Devanagari, the set is extended to include NUKTA\_SEEN, IN\_CLUSTER\_M, and IN\_VOWEL\_M.

\subsection{Zero-Breakage Guarantee}
\begin{definition}A tokenizer $T$ satisfies the Zero-Breakage Guarantee for a given language schema with grammar $S$ if it satisfies two conditions.
Let $T(w) = t_1,t_2,t_3,...,t_n$ be the sequence of tokens produced for an input string $w$.
\begin{enumerate}
    \item Losslessness: The concatenation of all tokens must perfectly reconstruct the original string $w$ ($w = t_1\cdot t_2 \cdot t_3 ... \cdot t_n$), with the exception that any character missing from the vocabulary or removed during pruning is replaced with the \texttt{[UNK]} token.
    \item Linguistic Integrity:   Each token $t_i$ is composed of one or more complete atomic syllables accepted by $S$, or a single passthrough character (class $O$ or $ORPHAN$).
\end{enumerate}
\end{definition}

\section{The WWHO Architecture}
The WWHO breaks down the tokenization process into first principles using three layers, based on the concepts of Where, What, and How Often. This approach enables more effective training of multilingual tokenizers by addressing Abugida issues and seamlessly integrating with current frontier tokenization approaches, without replacing existing methods. The architecture is completely language independent. The linguistic constraints and formal grammar of a specific script, such as the character class, the DFA transition table, and Unicode block ranges, are separated from the core algorithm and defined dynamically via an external \texttt{JSON} Schema, allowing WWHO to support any Abugida or complex script without modifying the codebase, simply by swapping the schema file. To add a new linguistic script, it's enough to add only a new language-specific \texttt{JSON} file.  Both Router and LinguisTrie use the same schema files.

\begin{figure}[h]
  \centering
  \includegraphics[scale=0.07]{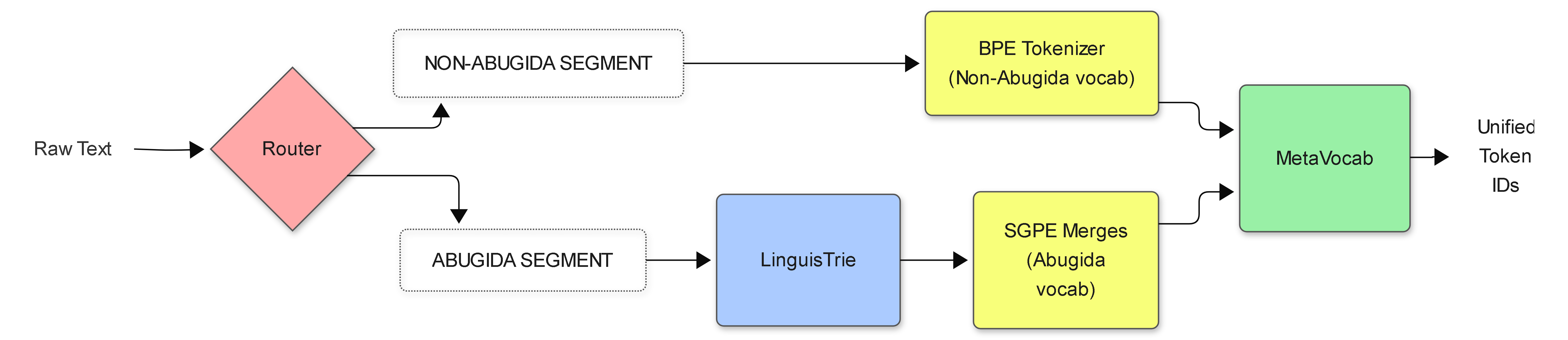}
  \caption{WWHO architecture.}
  \label{fig:arch}
\end{figure}

\subsection{Layer 1: Router (The ``Where'')}
The real-world data are multilingual code-switching text, rather than pure monoscripted text. Since BPE is highly effective for Latin scripts such as English, we don't deprecate it. Instead, the WWHO implements a targeted, localized routing layer that scans the text in a single pass using a Unicode range block scanner. The scanned text is cleanly partitioned into Abugida and Non-Abugida segments. In this partitioning, the router detects leading spaces and flags them for the next LinguisTrie layer to attach to. Once segmented, the text is routed to its respective specialists. 

\subsubsection{Unicode-based Script Identification}
To demonstrate the architecture, we use 3 different script types as follows.
\begin{enumerate}
    \item Sinhala: \texttt{[U+0D80]-[U+0DFF]}
    \item Devanagari: \texttt{[U+0900]-[U+097F]}
    \item Other: which do not belong to the above two categories, such as Latin/ASCII/Code/Digits/Emojis, etc.  
\end{enumerate}

Symbols such as Devanagari Punctuation (Dandas - \texttt{(U+0964)}, \texttt{(U+0965)}) are also considered as native characters and are included in the relevant block.

\subsubsection{Script Boundaries}
The router performs an O(N) linear scan over the text, classifying each character in O(1) time to identify script transitions. It enforces a Hard Script Boundary even in the absence of whitespace, resulting in an explicit partitioning of text into distinct language segments.
\begin{center}
e.g., `\sn{ඇ}pple' is segmented into `\sn{ඇ}' and `pple'
\end{center}

\paragraph{ZWJ/ZWNJ Handling:} 
Since complex conjuncts in such Abugida scripts are made up with invisible Unicode characters such as Zero-Width-Joiner (U+200D) and Zero-Width-Non-Joiner (U+200C), the router's identification logic extends beyond simple Unicode range checks to explicitly detect these control characters, preventing incorrect script segmentation. The router explicitly absorbs such joiners into the correct segment, preventing incorrect script boundaries. This further assists the next layer, LinguisTrie, in performing accurate syllabification, and it's fundamental to ensuring Zero Breakage Guarantee.

\subsubsection{Routing Policy}
Each segment is then routed to the appropriate tokenizer. Then the SGPE encodes Abugida segments, while other Non-Abugida segments are encoded by the standard BPE. This decoupling enables SGPE integration with existing Latin-specialized BPE tokenizers while preserving the linguistic integrity of complex Abugida scripts.

\subsection{Layer 2: LinguisTrie (The ``What'')}
The LinguisTrie, a DFA-based lexical scanner, is the layer responsible for grouping the Abugida text segments sent by the router, according to their Unicode codepoints, into Atomic Orthographic Syllables following the DFA and grammar defined in Section 3.  It operates with $O(N)$ time complexity by performing a single linear scan over the text.

\subsubsection{The Syllabification Algorithm}
\paragraph{Leading space handling:}
LinguisTrie captures the leading space flagged by the router and attaches it to the first identified syllable in a string. It attaches only one space character, and other whitespace (tabs, newlines) are emitted standalone.

\begin{proposition}
The LinguisTrie correctly extracts Atomic Orthographic Syllables from any valid Unicode string defined via a valid language schema as defined in Section 3.2 (Definition~\ref{def:valid_schema}).
\end{proposition}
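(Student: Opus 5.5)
The plan is to prove correctness of the LinguisTrie scanner by induction on the number of emitted tokens. At each step we show three things: the scanner starts from START, it consumes a nonempty prefix of the remaining input, and that prefix is either a maximal string accepted by the grammar $S$ or a single pass-through/orphan character. Concatenating the emitted pieces then reproduces the input, so losslessness follows, and each piece is an atomic syllable or a pass-through.

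\textbf{Step 1: each step emits the right kind of string.} Fix the remaining suffix $u$ and the state START. By the Determinism condition of Definition~\ref{def:valid_schema}, reading $u$ from START gives a unique run. This run is either infinite-free (the input is finite) or it halts at the first undefined transition or at the end of the input.

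Split on the class of the first character $u_1$.
\begin{itemize}
\item If $u_1\in O$, or $u_1$ leads to ORPHAN, then Emit State Isolation applies. These states have no outgoing transitions and are reachable only from START. So exactly the single character $u_1$ is emitted, which is a pass-through token as required.
\item If $u_1\in C\cup V$, the run enters the syllable sub-automaton. The scanner records the last position at which it sat in an accepting state and emits the prefix up to that point. At least one character is consumed, since IN\_CLUSTER and IN\_VOWEL are accepting.
\end{itemize}

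\textbf{Step 2: the emitted prefix is the longest syllable, which makes it atomic.} By Grammar Alignment, the prefixes of $u$ that drive START to an accepting state are exactly the prefixes lying in $L(S)$. Since the run is unique, the last accepting position seen before halting is the longest prefix of $u$ in $L(S)$. This is precisely the Maximal-Munch condition.

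Atomicity then means that no proper cut of the emitted syllable $s$ produces a syllable together with a remainder that could itself begin a legal syllable. Here I would argue from the shape of $S_s$ and $S_d$. Every nonempty proper suffix of a match starts with a symbol in $H\cup Z\cup P\cup M\cup N$, or with a $C$ that immediately follows $HZ^?$. In the first case the suffix would be scanned as an ORPHAN. In the second case the $C$ is bound to the preceding conjunct. Either way, splitting yields a meaningless fragment.

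Concretely, I would check this by listing, for each accepting state of Tables~\ref{tab:sinhala_dfa} and~\ref{tab:devanagari_dfa}, the classes on its outgoing edges, and confirming that none of those classes is $V$ or $O$, and that $C$ appears only after H or ZWJ. In effect this is a proof that the regular expression has no internal syllable boundary.

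\textbf{Step 3: the main obstacle.} The hardest step is verifying Grammar Alignment itself for the given tables, rather than assuming it. I would convert $S_s$ (and $S_d$) into a Thompson/subset DFA, minimise it, and compare it state by state with the tables.

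One mismatch needs care. PILI\_SEEN has a P$\to$ORPHAN edge, whereas the emit states are supposed to be reachable only from START. I would resolve this by reading that entry as a boundary signal: the scanner emits the syllable accumulated so far, and the second pili is then re-read from START, where it goes to ORPHAN. This keeps both Emit State Isolation and losslessness intact.

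The same case analysis also has to accommodate the leading-space attachment. This is harmless: one space is prepended to the first syllable and does not affect which characters the DFA groups together.
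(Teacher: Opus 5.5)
The part of your plan that carries the proof is the paper's own argument, organised as an induction on emitted tokens. That part is Step 1 and the first paragraph of Step 2:
\begin{itemize}
\item Grammar Alignment corresponds to the paper's ``direct grammar mapping''.
\item Your longest-accepted-prefix argument corresponds to its maximal munch plus \texttt{last\_accept\_pos}.
\item Your use of Emit State Isolation corresponds to its passthrough/orphan clause.
\end{itemize}

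\textbf{The atomicity lemma fails.} The step that does not go through is the second half of Step 2. The grammar does have internal syllable boundaries. We have $CH \in L(S_s)$ and $CH \in L(S_d)$ via the $(P\mid H)?$ branch, and in the tables HAL\_SEEN is an accepting state with an outgoing $C$-edge. A suffix beginning with $C$ can always begin a syllable (START $\xrightarrow{C}$ IN\_CLUSTER). So the second case of your dichotomy is exactly where your criterion breaks whenever the prefix ends in $H$. Two examples from the paper show this:
\begin{itemize}
\item The virama-joined \emph{ksha} ($CHC$) in the Sinhala example splits as $(CH)(C)$.
\item The syllable \emph{dy\=a} ($CHCP$) in the Devanagari example splits as $(CH)(CP)$.
\end{itemize}
Both pieces are complete syllables of $S$ in each case, which is precisely the kind of cut your criterion forbids. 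Your table check (no $V$ or $O$ leaving accepting states, $C$ only after H or ZWJ) does pass. But a $C$-edge out of the accepting state HAL\_SEEN \emph{is} an internal boundary, so the check cannot yield ``no internal syllable boundary''. Saying ``the $C$ is bound to the preceding conjunct'' only restates what maximal munch does.

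The property you can actually establish is the one you already have: the emitted piece is the longest prefix of the remaining input lying in $L(S)$. That is all the paper uses. It reads ``atomic orthographic syllable'' as ``maximal-munch unit of $S$'' and derives it from maximal munch and \texttt{last\_accept\_pos}, with no structural indivisibility lemma. Drop the internal-boundary argument and state atomicity in that sense.

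\textbf{Step 3 is unnecessary and would not succeed as planned.} Grammar Alignment is part of the hypothesis ``valid language schema'', so you may assume it, as the paper does. Checking it for Tables~\ref{tab:sinhala_dfa} and~\ref{tab:devanagari_dfa} is a separate claim about those particular tables. If you carry out the comparison, you will find mismatches beyond the P$\to$ORPHAN edge:
\begin{itemize}
\item ZWJ\_SEEN is declared accepting, although $CHZ\notin L(S_s)$ and $CHZ\notin L(S_d)$.
\item In Devanagari, HAL\_SEEN $\xrightarrow{M}$ ACCEPT is a dead end, although $S_d$ generates $CHMM$.
\end{itemize}
So keep the proof conditional on the schema being valid rather than on verifying it. Your boundary-signal reading of the P$\to$ORPHAN entry is sensible. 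It is also what \texttt{last\_accept\_pos} yields if the literal edge is kept: ORPHAN is not accepting, so the scanner backs up to just after the first pili.
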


\begin{proof}
LinguisTrie strictly follows the mechanism below to extract perfect, atomic orthographic syllables from a given string:
\begin{enumerate}
    \item \textbf{Direct Grammar Mapping:} Every transition of the DFA is designed to follow the syllable grammar, the Regex ($S_s$ or $S_d$) defined within the script. Hence, the machine only moves forward if a syllable can be extracted from the given string.
    \item \textbf{Maximal Munch (Greedy Longest Match):} The machine doesn't stop immediately once it finds a syllable. Instead, it continues to check whether the syllable can have a further expanded form. This strategy ensures that complex conjuncts joined by viramas (e.g., \sn{`න්ද්‍ර`}, \sn{`ල්ප`}) are captured as a single syllable unit. This is a desirable outcome to reduce the overall token count.
    \item \textbf{{last\_accept\_pos:}} The machine remembers the last accept state at which it found a valid transition according to the grammar. When moving forward, it never fails if it meets an invalid character. It moves back and emits the last valid syllable. Hence, the syllable is always a valid atomic orthographic syllable.
    \item \textbf{Passthrough and Orphan handling:} If it finds a character belonging to class $O$ (Other), which is outside of the script's alphabet or if it finds an $ORPHAN$, which can be a part of a syllable, but individually meaningless (eg: an individual HAL sign), it emits that as a single string to ensure the Losslessness principle.
\end{enumerate}
\end{proof}

\textbf{Example Sinhala:}

\begin{tabular}{@{}l@{ }l@{}}
\textbf{Input} & : \sn{චන්ද්‍රයාගේ ආලෝකය පෘථිවියට ක්‍ෂණයෙන්/ක්ෂණිකව නොලැබේ.} \\[6pt]
\textbf{Syllables} & : [\sn{'ච', 'න්ද්\u200dර', 'යා', 'ගේ', ' ආ', 'ලෝ', 'ක', 'ය', ' පෘ', 'ථි', 'වි', 'ය', 'ට',}\\
& \phantom{: [}\sn{' ක්\u200dෂ', 'ණ', 'යෙ', 'න්', '/', 'ක්ෂ', 'ණි', 'ක', 'ව', ' නො', 'ලැ', 'බේ', '.']} \\[6pt]
\end{tabular}

\textbf{Example Devanagari Syllabification:}

\begin{tabular}{@{}l@{ }l@{}}
\textbf{Input} & : \dev{विद्यालय में पढ़ाई होती है।} \\[6pt]
\textbf{Syllables} & : [\dev{'वि'}, \dev{'द्या'}, \dev{'ल'}, \dev{'य'}, \dev{' में'}, \dev{' प'}, \\
& \phantom{: [}\dev{'ढ़ा'}, \dev{'ई'}, \dev{' हो'}, \dev{'ती'}, \dev{' है'}, \dev{'।'}] \\[6pt]
\end{tabular}

\subsection{Layer 3: SGPE and Meta Vocabulary}
\subsubsection{Syllable-aware Grapheme Pair Encoding}
While SGPE only merges the well-extracted, word boundary-aware atomic Abugida syllable pairs, which are also Unicode grapheme clusters, all non-Abugida text segments are processed by an existing BPE algorithm, and, for the demonstration, we used OpenAI's o200k\_base as the BPE tokenizer.
Velayuthan et al. (2025) introduced the GPE (Grapheme Pair Encoding) concept by replacing BPE's atomic unit (the byte) with Unicode grapheme clusters. SGPE extends this principle one level further: the atomic unit is a perfect Atomic Orthographic Syllable as defined in Section 3. 
Other than the following differences, SGPE is the same BPE algorithm:
\begin{enumerate}
\item \textbf{Syllabic initialization:} The base Abugida vocabulary is initialized with atomic syllables and passthrough characters emitted by LinguisTrie.
\item \textbf{Boundary-aware scoping:}
Unlike BPE, SGPE merges consider the word boundaries by any non-Abugida characters, punctuations, digits, in addition to white spaces.
\item \textbf{Pruning:} 
To ensure the vocabulary allows for more unique and frequently used merges, SGPE implements a pruning mechanism that replaces syllables below a defined threshold, the prune frequency, as `\texttt{[UNK]}' tokens. The result is that any Unicode sequence that appears less than the prune frequency is not included in the vocabulary, keeping it cleaned from most malformed Unicode sequences or typographical noise and valid but extremely rare tokens.
\end{enumerate}

\begin{proposition}
The WWHO architecture ensures the Zero-Breakage Guarantee.
\end{proposition}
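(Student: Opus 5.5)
The plan is to verify the two conditions of the Zero-Breakage Guarantee (Losslessness and Linguistic Integrity) layer by layer, and to show that each layer preserves the invariants the next one relies on. Write the input $w$ as the concatenation of router segments $w = s_1 s_2 \cdots s_k$. Each $s_j$ is either an Abugida segment (Sinhala or Devanagari block, with ZWJ/ZWNJ absorbed) or an ``Other'' segment. Layer 1 is a partition of $w$ into contiguous substrings, so $w$ is recovered by concatenating the segments. Two facts need checking here. First, every codepoint is assigned to exactly one segment; this follows from the $O(1)$ per-character classification, which is a total function. Second, no syllable accepted by $S_s$ or $S_d$ can straddle a segment boundary. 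For the second fact, every symbol of $\Sigma_s$ and $\Sigma_d$ lies in the script block or is a joiner, and joiners are explicitly absorbed into the adjacent Abugida segment. So a maximal run of alphabet symbols always lies inside one segment. This joiner argument is the only subtle point in Layer 1.

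For Layer 2, I would invoke the earlier proposition that LinguisTrie correctly extracts Atomic Orthographic Syllables under a valid schema (Definition~\ref{def:valid_schema}). I would make its output explicit as a factorization $s_j = u_1 u_2 \cdots u_m$. Each $u_i$ is one of three things: a string in $L(S)$, by Grammar Alignment, since emission happens only at the last accept state; a single ORPHAN character; or a single PASSTHROUGH character, by Emit State Isolation, since these states have no outgoing transitions and are reached only from START. Losslessness of this step comes from the scanner resuming at $\texttt{last\_accept\_pos}$. The consumed prefixes therefore tile $s_j$ exactly, and the scan always makes progress. From START every class has a defined transition, so at least one character is consumed per emission. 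The attached single leading space is a bookkeeping detail: I would treat it as a prefix carried with the syllable and stripped for the integrity check. For Other segments, the BPE tokenizer is byte-level and lossless on its own. Each of its tokens consists only of class-$O$ characters, so it lies outside the guarantee's syllable clause. I would argue this by reading ``passthrough'' as covering BPE output, which has to be stated as an interpretation.

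For Layer 3, I would argue by induction on the merge sequence that every SGPE token is a concatenation of whole LinguisTrie units. Initialization uses exactly the emitted units. A merge concatenates two existing tokens, so it never splits one. Boundary-aware scoping forbids merges across non-Abugida characters, which keeps passthrough characters as single tokens. Encoding applies the same merges to the same syllable factorization, so the invariant carries over. Concatenation preserves the factorization, which gives Linguistic Integrity. For Losslessness, the only exception is pruning: any unit below the prune frequency becomes \texttt{[UNK]}, which is exactly the exception allowed in the definition. Finally, the Meta-Vocabulary keeps the SGPE and BPE ID spaces disjoint, so decoding maps each ID back unambiguously.

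The main obstacle is the Layer 2 step. The proof of the earlier proposition is informal, so I would need to check carefully that ``emit the last accepted prefix'' really produces a word of $L(S)$ and not some other accepted string. Concretely, I would verify Grammar Alignment directly for Table~\ref{tab:sinhala_dfa}: the accept-state language should equal $C(HZ?C)^*(P\mid H)?M? \mid VM?$. One wrinkle is that ZWJ\_SEEN is listed as accepting even though a string ending in $HZ$ is not in $S_s$. I would handle this by treating the dangling $Z$ as a separate passthrough/ORPHAN unit, or by amending the accept set, rather than relying on the table as written.
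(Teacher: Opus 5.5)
\paragraph{Verdict.} Your decomposition matches the paper's. The router is a lossless partition, and LinguisTrie is a complete scanner whose units are either in $L(S)$ or single fallback characters (Proposition~2). SGPE only concatenates those units, and \texttt{[UNK]} is the allowed exception. You carry this out with much more care than the paper: the tiling and progress argument, the induction on merges, the leading space, and the remark that multi-character BPE tokens fit the definition only under an interpretation. There is one genuine gap, which the paper's proof shares, plus three smaller corrections to your table audit.

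\paragraph{The gap in Layer~3.} Your final inference, ``concatenation preserves the factorization, which gives Linguistic Integrity,'' does not follow, and the paper's proof makes the same leap. The definition admits an ORPHAN only as a standalone single-character token. Your induction shows only that each token is a concatenation of whole LinguisTrie units. The barrier you invoke is boundary-aware scoping as the paper states it (whitespace, punctuation, digits, non-Abugida characters), and it does not cover orphans, which are lone $H$, $P$, $Z$, $M$ symbols, i.e.\ Abugida characters. For instance, in noisy text where \texttt{U+0D86} is typed as \texttt{U+0D85}\,+\,\texttt{U+0DCF} ($V$ then $P$), LinguisTrie emits the syllable $V$ followed by an orphan $P$. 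Nothing in SGPE as described prevents learning that pair if it is frequent enough. The merged token is then neither a concatenation of syllables accepted by $S$ nor a single fallback character. To close the step, add an explicit hypothesis on SGPE: every fallback unit (ORPHAN or PASSTHROUGH) is a merge barrier, both when counting pairs in training and when applying merges in encoding. Your passthrough sentence needs this too, since in-block class-$O$ marks such as \texttt{U+0D81} are neither digits nor punctuation. With that hypothesis your induction gives exactly the dichotomy the definition demands. Without it you only get the weaker invariant that no LinguisTrie unit is split across tokens.

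\paragraph{Smaller corrections to the table audit.}
\begin{enumerate}
\item The guarantee needs only the inclusion $L(\mathrm{DFA}) \subseteq L(S)$, not Grammar Alignment as an equality. ZWJ\_SEEN is the only violation of that inclusion in either table. Your fix (make ZWJ\_SEEN non-accepting, so a dangling $Z$ is rescanned from START as an ORPHAN) is correct, and you should apply it to the Devanagari table as well. That table also fails the reverse inclusion: $\text{HAL\_SEEN} \xrightarrow{M} \text{ACCEPT}$ is a dead end, so $C\,H\,M\,M \in L(S_d)$ is rejected. This is harmless here, since the second $M$ simply becomes an orphan.
\item Both tables violate the Emit State Isolation you rely on, via $\text{PILI\_SEEN} \xrightarrow{P} \text{ORPHAN}$. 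Read that entry as a boundary ($-$). Then $C\,P\,P$ yields the syllable $C\,P$ followed by an orphan $P$, rather than a three-character ``orphan''.
\item Your Layer~1 premise that $\Sigma_d$ lies inside the router's block plus the joiners is false for the ranges of Section~4.1.1. The Vedic modifiers in class $M$ (\texttt{U+1CD0}+, \texttt{U+A8E0}+) lie outside \texttt{U+0900}--\texttt{U+097F}. For your no-straddling claim to hold, the router must take its ranges from the schema's declared blocks, which the Completeness condition and the shared schema file permit.
\end{enumerate}
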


\begin{proof}
Since the Router performs only a simple partitioning operation on a given string $w$ without any deletion or modification, the character sequence is preserved. Then, as a complete scanner, LinguisTrie ensures no character is discarded at the syllabification stage.  Since the SGPE layer only performs the merging on these units without modifying the characters, the Losslessness requirement ($w = t_1\cdot t_2 \cdot t_3 ... \cdot t_n$) is satisfied (subject to \texttt{[UNK]}).

Inheriting Proof of Proposition 2 made in Section 4.2, SGPE only merges either perfect Atomic Orthographic Syllables accepted by $S$ or fallback strings (Orphan/Passthrough). This satisfies the 2nd requirement for the Zero-Breakage guarantee, and the proposition follows.
\end{proof}

\subsection{Unified Meta-Vocabulary}
We introduce the Unified Meta-Vocabulary approach to prevent Abugida tokens processed by SGPE and non-Abugida tokens processed by BPE IDs from colliding with each other.

Let $V_{BPE}$ be the vocabulary size of the BPE tokenizer (e.g., $V_{BPE}=200,019$ for \texttt{o200k\_base}), and $V_{SGPE}$ be the size of the tokenized SGPE vocabulary. The WWHO architecture sequentially maps the ID spaces into a contiguous array:
\begin{align*}
    \text{ID } 0 \dots (V_{BPE}-1) &\longrightarrow \text{non-Abugida (via } \texttt{BPE}) \\
    \text{ID } V_{BPE} \dots (V_{BPE} + V_{SGPE} - 1) &\longrightarrow \text{Abugida (via SGPE)}
\end{align*}

During tokenization, a $V_{BPE}$ offset is applied to all SGPE-generated IDs to align them within the unified meta-vocabulary. This prevents ID collisions and enables the model to deal with multiple scripts in a single embedding space. In inference, the process is reversed for detokenization.

\subsubsection{End-to-End Execution Trace}
To illustrate the complete workflow, consider the code-switched input: ``\sn{ඔයා} 1 special \dev{अद्भुत}''.

\begin{enumerate}
    \item \textbf{Router (Layer 1):} The input is partitioned into script-specific segments: the Sinhala segment ``\sn{ඔයා}'', the Other segment `` 1 special'', and the Devanagari segment `` \dev{अद्भुत}''. The leading space before the Devanagari word is flagged for internal attachment.
    \item \textbf{LinguisTrie (Layer 2):} The Abugida segments are decomposed into atomic orthographic syllables. The Sinhala segment produces \texttt{["\sn{ඔ}", "\sn{යා}"]} and the Devanagari segment produces \texttt{["\dev{ अ}", "\dev{द्भु}", "\dev{त}"]}.
    \item \textbf{SGPE and BPE (Layer 3):} Final tokens are generated based on the learned vocabulary. Non-Abugida segments are processed via the foundation BPE, while Abugida syllables are merged by SGPE.
\end{enumerate}

The final tokenization result is:
\begin{center}
\texttt{["\sn{ඔයා}", " ", "1", " special", "\dev{ अद्भुत}"]}
\end{center}

\section{Training \& Evaluation}
In this section, we evaluate the performance of the WWHO architecture across three key scripts (Sinhala, Devanagari, and English) and compare it against frontier tokenizers: OpenAI’s \texttt{o200k\_base}, Meta’s \texttt{Llama 4 Scout}, and \texttt{DeepSeek V3}. We report the Token-to-Word Ratio (TWR), the characters per token (CPT), and the relative token reduction.

For training and evaluation, we created a 30-million sentence corpus \citep{wwho30m} aggregating four open-source datasets to represent a balanced mix of Sinhala (35\%), Hindi(45\%), and English(20\%). The final corpus consists of: 
\begin{enumerate}
\item MADLAD\_CulturaX\_cleaned \citep{polyglots}
\item A filtered subset of CC100-Sinhala \citep{cc100sinhala} containing only Sinhala and English code-mixed sentences.
\item Two subsets of MADLAD-400 (pure Hindi and pure English) \citep{allen2023madlad400} 
\item A filtered subset of HINMIX\_hi-en \citep{hicm2024synthetic} containing only Hindi and English code-mixed sentences.
\end{enumerate}

The corpus was split 95\%/5\% into training and test sets, yielding a 1,499,950-sentence evaluation partition containing 122 million characters.

\subsection{Training Configurations}
We set the training hyperparameters as follows, and the evaluation results are based on these configurations.

\begin{table}[H]
\caption{SGPE Training Hyperparameters}
\label{tab:hyperparams}
\begin{center}
\begin{tabular}{lc}
\toprule
Parameter & Value \\
\midrule
Unified Meta-Vocabulary Size & 328,020 \\
BPE (o200k\_base) Vocab Size & 200,019 \\
SGPE Vocab Size & 128,000$^\dagger$ \\
Prune frequency threshold $\theta$ & 100 \\
Training corpus size & $\sim$28.5 M sentences \\
Evaluation corpus size & $\sim$1.5 M sentences \\
Code-switching routing & Enabled \\
\bottomrule
\end{tabular}
\end{center}
\end{table}

\paragraph{$^\dagger$Note:} The SGPE vocabulary consists of 128,000 tokens, as the hyperparameter is set to 128,000. Since the LinguisTrie attaches space as a prefix to the next syllable, the standalone space token is not included in the SGPE vocabulary. Therefore, one space token is injected later, as an architectural design to ensure the system's integrity. Hence, the total SGPE vocabulary size is 128,001 tokens.

\subsection{Tokenization Efficiency (TWR, CPT \& Reduction)}
The Token-to-Word Ratio (TWR) measures how many tokens are required to represent a single word on average. A TWR closer to 1.00 indicates that the tokenizer is effectively treating each word as a single unit without any fragmentation.

\begin{table}[H]
\centering
\caption{Tokenization Efficiency Comparison Across Scripts}
\label{tab:tokenization_comparison}
\resizebox{\columnwidth}{!}{%
\begin{tabular}{@{}llrrrr@{}}
\toprule
\textbf{Script} & \textbf{Tokenizer} & \textbf{Total Tokens} & \textbf{TWR} & \textbf{CPT} & \textbf{\% Reduction (vs SGPE)} \\
\midrule
Sinhala & SGPE (WWHO)    & 6,654,288  & 1.274 & 4.83 & ---    \\
        & OpenAI (o200k) & 17,360,196 & 3.324 & 1.85 & 61.7\% \\
        & Llama 4 Scout  & 18,157,707 & 3.476 & 1.77 & 63.4\% \\
        & DeepSeek V3    & 29,152,698 & 5.581 & 1.10 & 77.2\% \\
\midrule
Hindi   & SGPE (WWHO)    & 13,433,554 & 1.181 & 4.29 & ---    \\
        & OpenAI (o200k) & 18,394,075 & 1.617 & 3.13 & 27.0\% \\
        & Llama 4 Scout  & 19,566,121 & 1.720 & 2.94 & 31.3\% \\
        & DeepSeek V3    & 31,682,218 & 2.786 & 1.82 & 57.6\% \\
\midrule
English & SGPE (WWHO)    & 7,240,147  & 1.330 & 4.46 & ---    \\
        & OpenAI (o200k) & 7,420,527  & 1.364 & 4.35 & 2.4\%  \\
        & Llama 4 Scout  & 7,512,843  & 1.381 & 4.30 & 3.6\%  \\
        & DeepSeek V3    & 7,904,670  & 1.453 & 4.09 & 8.4\%  \\
\midrule
Overall & SGPE (WWHO)    & 27,327,989 & 1.240 & 4.47 & ---    \\
        & OpenAI (o200k) & 43,174,798 & 1.959 & 2.83 & 36.7\% \\
        & Llama 4 Scout  & 45,236,671 & 2.053 & 2.70 & 39.6\% \\
        & DeepSeek V3    & 68,739,586 & 3.119 & 1.78 & 60.2\% \\
\bottomrule
\end{tabular}%
}
\end{table}

\begin{table}[H]
\centering
\caption{Example Tokenizations Across Tokenizers}
\label{tab:example_tokenizations}
\resizebox{\columnwidth}{!}{%
\begin{tabular}{@{}llll@{}}
\toprule
\textbf{Word} & \textbf{Tokenizer} & \textbf{Tokens} & \textbf{Count} \\
\midrule
\sn{ව්යාකරණය}
  & SGPE (Ours)    & \sn{['ව්යා', 'කරණය']}                                                              & 2 \\
  & OpenAI (o200k) & \sn{['ව්', 'යා', 'ක', 'රණ', 'ය']}                                                  & 5 \\
  & Llama 4 Scout  & \sn{['ව්', 'යා', 'කර', 'ණය']}                                                      & 4 \\
  & DeepSeek V3    & \sn{['ව', ''්', 'ය', 'ා', 'ක', 'ර',} \dejv{'�'}, \dejv{'�'}, \sn{'ය']}            & 9 \\
\midrule
\sn{ශ්‍රී ලංකාව}
  & SGPE (Ours)    & \sn{['ශ්\textbackslash u200dරී', ' ලංකාව']}                                                      & 2 \\
  & OpenAI (o200k) & \sn{['ශ්', '\textbackslash u200dරී', ' ලංක', 'ාව']}                                              & 4 \\
  & Llama 4 Scout  & \sn{['ශ්', '\textbackslash u200dර', ''ී', ' ල', 'ං', 'ක', 'ාව']}                                  & 7 \\
  & DeepSeek V3    & [\dejv{'�'}, \dejv{'�'}, `\sn{්}', '\textbackslash u200d', \sn{'ර', ' 'ී'}, \dejv{'�'}, \dejv{'�'}, \dejv{'�'}, \dejv{'�'}, \sn{'ක', 'ා', 'ව']} & 13 \\
\midrule
\dev{अंतर्राष्ट्रीय}
  & SGPE (Ours)    & \dev{['अंतर्राष्ट्रीय']}                                                            & 1 \\
  & OpenAI (o200k) & \dev{['अ', 'ंतर', '्र', 'ाष्ट्रीय']}                                                & 4 \\
  & Llama 4 Scout  & \dev{['अ', 'ंतर', '्र', 'ाष्ट्रीय']}                                                & 4 \\
  & DeepSeek V3    & \dev{['अ', 'ंत', 'र', '्र', 'ाष', '्ट', '्री', 'य']}                                & 8 \\
\midrule
\dev{कृत्रिम बुद्धिमत्ता}
  & SGPE (Ours)    & \dev{['कृत्रिम', ' बुद्धिमत्ता']}                                                   & 2 \\
  & OpenAI (o200k) & \dev{['क', 'ृ', 'त्र', 'िम', ' बुद्ध', 'िम', 'त्ता']}                               & 7 \\
  & Llama 4 Scout  & \dev{['क', 'ृ', 'त्र', 'िम', ' ब', 'ुद्ध', 'िम', 'त्ता']}                           & 8 \\
  & DeepSeek V3    & \dev{['क', 'ृ', 'त्र', 'िम', ' ब', 'ुद', '्ध', 'िम', 'त्त', 'ा']}                  & 10 \\
\bottomrule
\end{tabular}%
}
\end{table}

\subsection{Key Observations}
\paragraph{The ``Token Tax" Reduction:} For Sinhala text, SGPE demonstrates a massive 61.7\%-77.2\% reduction in token count, allowing 4.83 characters per token. While frontier models often fragment a single Sinhala word into 3-10 tokens with meaningless sub-word pieces, SGPE maintains a TWR of 1.274 while ensuring the Zero-Breakage Guarantee. And for Devanagari (Hindi), SGPE achieves 1.181 TWR and 4.29 CPT, reducing 27.0\%-57.6\% token reduction.

\paragraph{Overall Efficiency:} Across a mixed multilingual test set containing Sinhala, Hindi, and English text in a 35\%:45\%:20\% ratio, SGPE reduces the total token count by 36.7\%-60.2\% compared to the industry standard frontier tokenizers.

\paragraph{Reduction on English?:}
An unexpected observation during evaluation was the 2.4\%-8.4\% reduction in token count for the English compared to other tokenizers. While WWHO routes Latin text directly to the foundation BPE tokenizer, the observed efficiency gain is a result of unintentional code-switching within real-world datasets.
\textbf{The Router's Contribution:} In datasets classified as English, there can be a minor percentage of Abugida texts(e.g., proper nouns, locations, or localized terms). While the foundation BPE tokenizer fragments these foreign segments into byte-level tokens, the WWHO's Router accurately isolates these Abugida clusters. Then LinguisTrie and SGPE perfectly tokenize them. This localized efficiency results in the observed token reduction even in English-dominant contexts. 

To verify this observation and that the routing layer does not modify English segments while processing, we conducted a pure ASCII Stress Test using 10,000 sentences containing zero Abugida characters. In this controlled environment, the token counts produced by WWHO and o200k\_base were 100\% identical (0.0\% difference), which means all the ASCII text was processed only by the BPE tokenizer.

\subsection{Context Window Capacity Analysis}
One of the most impactful factors of the efficiency of a tokenizer is the LLM context window. The context window capacity multiplier, which measures the context window saving by a tokenizer compared to another, can be calculated by the following formula.
\begin{equation} \text{Capacity Multiplier} = \frac{\text{$Tokens_{baseline}$}}{\text{$Tokens_{sgpe}$}}
\end{equation}
\begin{table}[H]
\centering
\caption{Effective Context Window Capacity  Multiplier Comparison}
\label{tab:context_window_savings}
\resizebox{\columnwidth}{!}{%
\begin{tabular}{@{}llrrl@{}}
\toprule
\textbf{Script} & \textbf{Tokenizer} & \textbf{SGPE Tokens} & \textbf{Other Tokens} & \textbf{Context Multiplier} \\
\midrule
Sinhala & vs OpenAI (o200k) & 6,654,288  & 17,360,196 & 2.61$\times$ \\
        & vs Llama 4 Scout  & 6,654,288  & 18,157,707 & 2.73$\times$ \\
        & vs DeepSeek V3    & 6,654,288  & 29,152,698 & 4.38$\times$ \\
\midrule
Hindi   & vs OpenAI (o200k) & 13,433,554 & 18,394,075 & 1.37$\times$ \\
        & vs Llama 4 Scout  & 13,433,554 & 19,566,121 & 1.46$\times$ \\
        & vs DeepSeek V3    & 13,433,554 & 31,682,218 & 2.36$\times$ \\
\midrule
English & vs OpenAI (o200k) & 7,240,147  & 7,420,527  & 1.02$\times$ \\
        & vs Llama 4 Scout  & 7,240,147  & 7,512,843  & 1.04$\times$ \\
        & vs DeepSeek V3    & 7,240,147  & 7,904,670  & 1.09$\times$ \\
\midrule
Overall & vs OpenAI (o200k) & 27,327,989 & 43,174,798 & 1.58$\times$ \\
        & vs Llama 4 Scout  & 27,327,989 & 45,236,671 & 1.66$\times$ \\
        & vs DeepSeek V3    & 27,327,989 & 68,739,586 & 2.52$\times$ \\
\bottomrule
\end{tabular}%
}
\end{table}

As shown in our evaluation, SGPE provides a very effective context window capacity expansion for Abugida scripts. This allows over a billion Abugida users to use LLMs more effectively and get better responses. \textbf{The Token Tax} will have a significant drop via the WWHO architecture.

\subsection{Vocabulary Quality \& Glitch Token Prevention}
Standard BPE tokenizers often allocate vocabulary capacity to statistical anomalies, leading to ``glitch tokens" or orphaned Unicode joiners (e.g., isolated ZWJ or Viramas). In Abugida scripts, the generation of such artifacts by an LLM causes low response quality. To evaluate vocabulary quality, we conducted a Glitch Token detection test across the 1.5-million sentence evaluation corpus.
The results confirm that SGPE produces 0 glitch tokens for both Sinhala and Devanagari scripts. Out of 128,001 SGPE tokens (Sinhala and Devanagari), only 1,394 (roughly 1\%) remained unused during evaluation. This rate shows that by enforcing syllable-level linguistic accuracy, the SGPE algorithm avoids creating meaningless fragments and effectively uses its vocabulary for real-world multilingual script structures.

\subsection{Empirical Validation of Zero-Breakage Guarantee}
To verify the Zero-Breakage Guarantee defined and formally proven in Sections 3 and 4, we conducted a Round-Trip test using the same test corpus. Across 122 million mixed multilingual characters, SGPE achieved 100\% round-trip consistency with zero non-[UNK] mismatches and a negligible [UNK]-caused loss rate of 0.08\%, practically verifying the theoretical proof.

\section{Conclusion}
In this work, we presented the WWHO, a novel DFA-based multilingual tokenization architecture, and SGPE, the merging algorithm used in WWHO.

We proposed that the basic, atomic unit of a linguistic script should be the Syllable, extending the GPE concept to SGPE. By separating linguistic rules from statistical compression, we ensure that complex grapheme clusters remain intact throughout the process.

WWHO satisfies the Zero-Breakage Guarantee both theoretically and practically, by ensuring that meaningful syllables are never fragmented. Our evaluation shows that SGPE achieves up to a 77.2\% reduction in token count for Sinhala and 57.6\% for Hindi compared to frontier tokenizers. This efficiency allows 4.29-4.83 characters per token and expands the effective context window by up to $4.38\times$. These results prove that preserving linguistic integrity directly reduces the ``Token Tax" for Abugida languages.

We plan to scale the WWHO framework to support Abugida and other complex scripts. Future versions of this work will include extensive downstream evaluations and empirical comparisons with existing tokenizers to further validate the performance of the WWHO architecture. The full source code, evaluation harness, and developer notes are open-sourced on GitHub \citep{wwho2026code}, and the full dataset we used to train and evaluate is available at Hugging Face \citep{wwho30m}. Pre-trained artifacts, including the \texttt{vocab.json}, \texttt{tokenizer.json}, and optimized merge rules, are available via Hugging Face \citep{wwho2026tokenizer}.

\section*{Acknowledgements}
The author is deeply grateful to Dr. Nisansa de Silva for all the invaluable support and guidance on this research. We thank the \citet{polyglots}, Allen AI \citep{allen2023madlad400}, \citet{hicm2024synthetic}, Mr. \citet{cc100sinhala} for open-sourcing the datasets we used to create ours \citep{wwho30m}.
We also acknowledge \citet{tiktoken}, \citet{llama4scout}, and \citet{deepseekv3} for their contributions by open-sourcing frontier tokenizers, which served as essential baselines for our evaluation.

\bibliographystyle{plainnat} 
\bibliography{refs}

\end{document}